\documentclass[letterpaper, 10 pt, conference]{ieeeconf}  
\IEEEoverridecommandlockouts                              

\usepackage{amsmath,amsfonts,amssymb,color,amsthm}
\usepackage{nccmath}

\usepackage{algorithm}
\usepackage{algpseudocode}
\usepackage{array}

\usepackage{graphicx}
\usepackage{cite}
\usepackage{footmisc}
\usepackage{mathtools}
\usepackage{bbm}
\usepackage{bm}
\usepackage{comment}
\usepackage{dsfont}

\usepackage{mathtools}
\usepackage{medmath}
\usepackage{tikz}
\usetikzlibrary{automata, positioning, arrows.meta}
\usepackage{relsize}
\usetikzlibrary{shapes,arrows}
\usepackage{cite}

\definecolor{winered}{rgb}{0.5,0,0}
\usepackage{scalerel}
\usepackage{xcolor}
\usepackage[colorlinks]{hyperref}

\newcommand{\mc}{\mathcal}

\AtBeginDocument{%
  \hypersetup{
    citecolor=blue,
    linkcolor=winered}}

\DeclarePairedDelimiter\ceil{\lceil}{\rceil}

\DeclarePairedDelimiter{\floor}{\lfloor}{\rfloor}

\newtheorem{problem}{Problem}

\newtheorem{theorem}{Theorem}

\newtheorem{lemma}{Lemma}
\newtheorem{remark}{Remark}

\title{\LARGE \bf Robust Federated Q-Learning with Almost No Communication}
\author{Sreejeet Maity and Aritra Mitra 
\thanks{The authors are with the Department of Electrical and Computer Engineering, North Carolina State University. Email: {\tt \{smaity2, amitra2\}@ncsu.edu}.}}
\begin{document}
\maketitle
\begin{abstract} We consider a federated reinforcement learning setting involving $M$ agents, all of whom interact with a common Markov Decision Process (MDP). The agents exchange information via a central server to learn the optimal value function. Our goal is to understand to what extent one can hope for collaborative sample-complexity speedups in such a setting, when a small fraction of the agents are adversarial and can act arbitrarily. To that end, we propose \texttt{\textcolor{winered}{Robust Fed-Q}}, a federated Q-learning algorithm that blends ideas from both model-based and model-free RL, along with the median-of-means device from robust statistics. We prove that despite corruption, with high-probability, \texttt{\textcolor{winered}{Robust Fed-Q}} (i) guarantees \emph{exact} convergence to the optimal value function in the limit of infinite samples, and (ii) enjoys near-optimal finite-time rates that benefit from collaboration. In addition, our approach requires just $\tilde{O}(1)$ rounds of communication to achieve each of the above guarantees, a feature of independent interest in FL where communication is the major bottleneck. 

\end{abstract}

\section{Introduction}
\label{sec:Intro}
Recent years have witnessed significant advances in the paradigm of reinforcement learning (RL), with applications spanning robotics, autonomous driving, and wireless sensor networks. In these applications, the problem of interest involves an agent (or agents) sequentially interacting with an \emph{unknown} environment with the aim of maximizing some long-term goal. The size and complexity of modern autonomous systems, such as the ones above, requires processing high-dimensional data and contending with large state and action spaces. As such, \emph{RL algorithms used in practice tend to be extremely data-hungry}, i.e., they require several data samples to achieve desired levels of accuracy. Inspired by the success of federated supervised learning~\cite{konevcny}, a natural attempt to improve accuracy is to envision a cooperative RL setting where multiple agents interacting with similar environments can exchange information to expedite the process of learning ``good" policies. This has led to the emergence of a new paradigm called federated reinforcement learning (FRL), which has shown a lot of empirical promise in reducing the sample-complexity of complex RL tasks~\cite{qiFRL}.

The hope of achieving collaborative performance gains in FRL hinges on one crucial assumption: all agents act \textit{reliably}, as expected. Such an idealistic assumption is unrealistic in large-scale systems, where certain agents can either be faulty or under attack. Blindly trusting data collected from such agents for downstream decision-making can have catastrophic consequences, especially for safety-critical applications. This leads to a \textbf{\textit{fundamental tension:}} \emph{while more data and collaboration can potentially improve performance, it can also completely disrupt the process of learning in the presence of adversarial attacks; so does more data help or hurt?}  Surprisingly, despite the surge of interest in multi-agent and federated RL, very little is understood about this fundamental tension, especially when it comes to \textit{non-asymptotic/finite-time theoretical performance guarantees}.

\textbf{The Setting.} In this context, we consider a setting involving $M$ agents, where every agent interacts with a common environment modeled as a Markov Decision Process (MDP). Like in the standard FL setting, the agents are allowed to communicate via a central server, while keeping their raw data (states, actions, and rewards) private. When all agents operate as expected, recent work~\cite{woo2023blessing} has shown that using federated variants of the celebrated model-free Q-learning algorithm~\cite{watkins1992q} can lead to provable benefits of collaboration in reducing the number of samples needed to obtain an accurate estimate of the optimal state-action value function $Q^*$. We depart from this setting by allowing a small fraction $\varepsilon$ of the agents to be \emph{worst-case adversarial}, i.e., adversarial agents are omniscient and can act arbitrarily. Our \textbf{goal} is to provide concrete answers to the following questions. 
\\
\emph{Subject to the above attack model, is it still possible to converge exactly (i.e., without any residual error) to $Q^*$? If yes, can one still hope for collaborative performance gains?}

We answer both the above questions in the affirmative for the first time by making the following contributions. 

\( \bullet \) \textbf{Algorithmic Contributions.} In Section~\ref{sec:Algorithm}, we propose a novel robust and communication-efficient federated Q-learning algorithm called \texttt{Robust Fed-Q} that blends ideas from both model-based and model-free RL. Our algorithm runs in epochs, where within each epoch, every agent uses data collected based on a synchronous (generative) sampling model~\cite{kearns, even2003learning, sidford, Waiwright, li2024q} to construct a low-variance empirical estimate of the Bellman optimality operator. The empirical Bellman operators thus constructed lead to less noisier update directions. This is a critical aspect of our approach, since less noisier update directions lower the uncertainty in the information received from uncorrupted agents. Our second key innovation is to tailor the median-of-means device~\cite{lugosiHT} from robust statistics to construct a robust aggregation scheme at the server. As we discuss in detail in Sections~\ref{sec:Algorithm} and~\ref{sec:mainresult}, each of the above aspects of our algorithm needs to be designed delicately to achieve near-optimal statistical guarantees. It is important to emphasize here that even in the absence of adversarial agents, \emph{the structure of our proposed algorithm is fundamentally different from standard FRL approaches}. In particular, the idea of constructing low-variance empirical Bellman operators by maintaining estimates of the MDP's probability transition kernels is unique to our approach. 

\( \bullet \) \textbf{Theoretical Contributions.} Our main result, namely Theorem~\ref{theorem:theoremmainresult}, provides a finite-sample guarantee on the output of \texttt{Robust Fed-Q}. When each agent has access to $T$ samples from a generative model, we establish a high-probability error bound on the order of
$$ \tilde{O}\left( \frac{1}{(1-\gamma)^{2.5} \sqrt{MT}} \right) +   \tilde{O}\left( \frac{\sqrt{\varepsilon}}{(1-\gamma)^{2.5} \sqrt{T}} \right),$$
where $\gamma \in (0,1)$ is the discount factor. When $\varepsilon=0$, i.e., there is no corruption, our bound preserves the optimal $1/\sqrt{MT}$ rate achievable with a total of $MT$ samples across agents. For sufficiently small $\varepsilon$, our algorithm continues to enjoy a benefit of collaboration. Importantly, even in the presence of corruption, our bound reveals that the final estimation error goes to $0$ in the limit of infinite samples $T$, i.e., using our approach, \emph{one can completely mitigate the effect of adversaries}. Finally, and perhaps surprisingly, we show that all of the above can be achieved with a communication overhead that is only logarithmic in both $M$ and $T$. To sum up, \textbf{we provide the first results in FRL to show that near-optimal statistical rates can be achieved despite worst-case adversarial corruption, with just $\tilde{O}(1)$ rounds of communication.} We believe this is a significant finding since robustness and communication-efficiency are both major considerations in federated RL.  

\textbf{Related Work.} We briefly discuss related work below.\\
\( \bullet \) \textbf{Single-Agent Q-learning.} While there is a rich body of work analyzing the asymptotic properties of Q-learning~\cite{borkar, tsitsiklis94}, a more recent line of literature~\cite{Qu, Waiwright, li2024q} has focused on providing finite-sample guarantees. These analyses pertain to single-agent settings and focus on the popular model-free Q-learning algorithm of Watkins~\cite{watkins1992q}. Other than the multi-agent and robustness aspects, our algorithm differs from the basic Q-learning algorithm since it interleaves model-estimation with value-function updates. 

\( \bullet \) \textbf{Federated RL.} Several recent papers~\cite{jinFRL, khodadadian, woo2023blessing, wang2023TMLR} have explored federated variants of popular RL algorithms. Our work complements these papers by considering the robustness aspect. Furthermore, as mentioned earlier, our algorithm is structurally different from typical FRL algorithms and incurs significantly less communication.

\( \bullet \) \textbf{Adversarial Robustness in Distributed Learning.} The theme of adversarial robustness has been extensively explored in distributed learning~\cite{chen, chensu, yin}, but primarily in the context of {supervised learning}/stochastic optimization. While our problem formulation is inspired by such works, the nature of our algorithms and proof techniques are fundamentally different. Among the few recent papers that have considered adversarial agents in multi-agent RL, \cite{mitra2022collaborative} and~\cite{ganesh2024global} look at bandits and policy-gradient approaches, respectively - settings that are considerably different from the tabular Q-learning formulation we consider here. Finally, while~\cite{xie2023communication} and~\cite{ye2024resilient} do consider robust multi-agent TD and Q-learning, their guarantees are asymptotic, i.e., no finite-time rates are provided in these papers.  

\section{Background and Problem Formulation}
Before describing our multi-agent setting, we first review the necessary background on Markov Decision Processes.

\textbf{MDP Model.} An MDP is denoted by $\mc{M}=(\mathcal{S}, \mathcal{A}, \mathcal{P}, R, \gamma)$, where $\mathcal{S}$ is a finite state space, $\mathcal{A}$ is a finite action space, $\mc{P}$ is a set of Markov transition kernels, $R$ is a reward function, and $\gamma \in (0,1)$ is the discount factor. When a learning agent plays action 
$a$ at state $s$, the state of the MDP transitions to $s'$ with probability $\mc{P}(s'|s,a)$, and a scalar deterministic immediate reward $R(s,a)$ is observed.\footnote{The results in this paper can be extended, with minor modifications, to account for noisy, sub-Gaussian rewards.} We assume that the rewards are bounded, i.e., $\exists \bar{R} \geq 1$ such that $|R(s,a)| \leq \bar{R}, \forall (s,a) \in \mc{S} \times \mc{A}$. We consider deterministic policies $\pi: \mathcal{S} \rightarrow \mathcal{A}$ that map states to actions. To capture the quality of a policy $\pi$, we define a $\gamma$-discounted infinite-horizon value function $V_{\pi}: \mc{S} \mapsto \mathbb{R}$ as follows:
\begin{equation}\label{eqn:V_reward}
V_{\pi}(s) = \mathbb{E}\left[\sum_{t=0}^{\infty} \gamma^t R(s_t,a_t) \, \Big| \, s_0 = s\right],
\end{equation}
where $s_t$ is the state at time $t$, $a_t = \pi(s_t)$ is the action played at time $t$, and the expectation is taken w.r.t. the randomness in the states. The basic goal in RL is to find an optimal policy $\pi^*$ that maximizes $V_\pi(s)$ simultaneously for all states $s \in \mc{S}$, \emph{without prior knowledge of the reward functions and transition kernels of the MDP}. To explain how this is done in the single-agent setting, we define the state-action value function $Q_{\pi}:\mc{S} \times \mc{A} \mapsto \mathbb{R}$ as follows: 
\begin{equation}\label{eqn:Q_reward}
Q_{\pi}(s, a) = \mathbb{E}\left[\sum_{t=0}^{\infty} \gamma^t R(s_t,a_t) \, \Big| \, (s_0, a_0) = (s, a)\right].
\end{equation}
Let $Q^{*} = Q_{\pi^*}$ denote the optimal state-action value function. Then, $Q^*$ is the unique fixed point of the Bellman optimality operator $\mathcal{T}^{*}: \mathbb{R}^{|\mc{S}|  \times |\mc{A}|}  \rightarrow \mathbb{R}^{|\mc{S}|  \times |\mc{A}|}$ given by:
\begin{equation}\label{eqn:Bellman}
    (\mathcal{T}^{*}Q)(s,a) = R(s,a) + \gamma \mathbb{E}_{s' \sim \mc{P}(\cdot | s,a)}\left[\max_{a' \in \mathcal{A}} Q(s',a')\right].
\end{equation}
In other words, $\mc{T}^* (Q^*) = Q^*.$ The Bellman operator satisfies the following contraction property $\forall Q_1, Q_2 \in \mathbb{R}^{|\mc{S}| \times |\mc{A}|}$:
\begin{equation}\label{eqn:Bellmancontraction}
\lVert \mathcal{T}^* (Q_1) - \mathcal{T}^* (Q_2)\rVert_{\infty}\le \gamma \lVert Q_1 - Q_2\rVert_{\infty}.
\end{equation}
In what follows, we briefly discuss a synchronous version~\cite{kearns, even2003learning, sidford, Waiwright, li2024q} of the celebrated Q-learning algorithm 
that exploits the above properties of $\mc{T}^{*}$ to find $Q^*$.\\
\textbf{Synchronous Single-Agent Q-learning.} The synchronous Q-learning algorithm operates in iterations $t=0, 1, \ldots$, where in each iteration $t$, the agent maintains an estimate $Q_t$ of $Q^*$. In this setting, one assumes a generative model which provides the learning agent with the following data in each iteration $t$ for every state-action pair $(s,a) \in \mc{S} \times \mc{A}$: (i) a new state $s_t(s,a)$ drawn independently from $\mc{P}(\cdot| s,a)$; and (ii) a deterministic reward $R(s,a)$. Using this information, the agent constructs an empirical Bellman operator $\mc{T}_t:\mathbb{R}^{|\mc{S}| \times |\mc{A}|} \rightarrow \mathbb{R}^{|\mc{S}| \times |\mc{A}|}$, defined as  
$$ (\mc{T}_tQ)(s,a) \triangleq R(s,a) + \gamma \max_{a'\in \mathcal{A}}Q(s_t(s,a),a'), \forall Q \in \mathbb{R}^{|\mc{S}| \times |\mc{A}|},$$
where $s_t(s,a) \sim \mc{P}(\cdot| s,a).$ Using the empirical operator $\mc{T}_t$, each component $(s,a)$ of $Q_t$ is updated as follows: 
\begin{equation}
\label{eqn:syncQ}
Q_{t+1}(s,a) = (1- \alpha_t)Q_t(s,a) + \alpha_t  (\mc{T}_tQ_t)(s,a), 
\end{equation}
where $\{\alpha_t\}$ is a suitable step-size sequence. The scheme described above is said to be \textit{synchronous} since in each iteration $t$, the agent gets to observe independent data samples for \emph{every} state-action pair, allowing every component of $Q_t$ to get updated simultaneously. Recent works~\cite{Waiwright, li2024q} have established non-asymptotic convergence rates for single-agent synchronous Q-learning, revealing that with high probability, the error $\Vert Q_T - Q^* \Vert_{\infty}$ decays as $1/\sqrt{T}$ after $T$ iterations. With this background in place, we are now ready to describe our setting of interest.

\textbf{Our Setting.} Our setting involves $M$ agents, where every agent interacts with a common environment modeled as an MDP $\mc{M}.$ To acquire information about $\mc{M}$, we assume that each agent has access to a synchronous sampling model~\cite{Waiwright, li2024q, kearns, sidford}. Furthermore, we make the standard assumption in federated RL that the data across agents are statistically independent~\cite{woo2023blessing, khodadadian, wang2023TMLR}. To be more precise, at each time-step $t$, and for each $(s,a) \in \mc{S} \times \mc{A}$, $M$ i.i.d. samples $s_{1,t}(s,a), s_{2,t}(s,a), \ldots, s_{M,t}(s,a)$ are generated from the distribution $\mc{P}(\cdot|s,a)$. The agents are allowed to communicate via a central server. However, to maintain \textbf{privacy} - a key concern in FRL - they are not allowed to exchange raw data in the form of rewards, actions, and state transitions. 

Since each agent interacts with the same MDP, it can learn $Q^*$ on its own by running the synchronous Q-learning algorithm we described earlier. So why communicate? Intuitively, if each agent can access its generative model $T$ times, then there are $MT$  total samples in the system for each state-action pair. As such, one should expect convergence to $Q^*$ at a faster rate of $\tilde{O}(1/\sqrt{MT})$, as opposed to the single-agent rate of $\tilde{O}(1/\sqrt{T})$. Recent work~\cite{woo2023blessing} has made this intuition precise and established a learning rate of $\tilde{O}(1/\sqrt{MT})$, thereby demonstrating a clear benefit of collaboration. 
\\
\textbf{Corruption Model.} We depart from the standard FRL framework by allowing a small fraction $\varepsilon \in [0, 1/2)$ of agents to be adversarial. As in robust distributed learning~\cite{yin, chensu, chen}, we consider a worst-case attack model, where the adversaries have complete knowledge of the agents' data, the MDP, and the algorithms being run. Furthermore, they can behave \emph{arbitrarily} and even collude with the goal of misleading the server and degrading global learning performance. 
\begin{problem}
\label{prob:Problem}
 Suppose each agent can access its respective generative model $T$ times. Given a confidence parameter $\delta \in (0,1)$, our goal is to develop a robust federated Q-learning algorithm that uses data from the $M$ agents to compute an estimate $\hat{Q}$ of $Q^*$ such that with probability $1-\delta$, the $\ell_{\infty}$ error $ 
 \Vert \hat{Q} - Q^* \Vert_{\infty}$ meets the following criteria: (i) decays to zero as $T \to \infty$, despite adversaries; and (ii) decays at the optimal $1/\sqrt{MT}$ rate in the absence of adversaries.
\end{problem}
In the next section, we develop a new algorithm called \texttt{Robust Fed-Q} that achieves both the requirements above with just $O(\log(MT))$ rounds of communication. 

\begin{remark} To isolate the challenges associated with robustness, we focus on a tabular RL setting under synchronous sampling. We note that to gain theoretical insights, both the generative synchronous sampling model~\cite{Waiwright, li2024q, kearns, sidford}, and the tabular setting~\cite{Waiwright, Qu, li2024q, woo2023blessing} have been extensively studied in prior RL work. Nonetheless, even for this seemingly simple setup, a complete understanding of Problem~\ref{prob:Problem} has remained open. Furthermore, as we shall see, even this setting requires the development of non-trivial algorithmic ideas. Thus, to clearly convey such ideas, we do not tackle function approximation or Markov sampling here. 
\end{remark}

\section{Algorithm} 
\label{sec:Algorithm}
\textbf{Structure of our Algorithm.} We propose an epoch-based algorithm called \texttt{Robust Fed-Q} (Algorithm~\ref{algo:Algo 2}) that interleaves the estimation of the Bellman operator with infrequent updates to the Q-table. Our approach involves $K$ epochs, each of duration $H$, such that $KH=T$, where we recall that $T$ is the total number of calls to the generative model per agent; since these calls are made in parallel across agents, $T$ can also be interpreted as the run-time duration of our algorithm. The server maintains an estimate $Q_k$ of $Q^*$ that is updated only at the end of each epoch $k \in [K]$ based on the information acquired from the $M$ agents during the latest epoch. Within each epoch, the agents perform local computations in isolation and communicate via the server at the end of the epoch. Thus, there are precisely $K$ rounds of communication. To achieve near-optimal statistical guarantees, we need to address two key questions: (i) \textbf{What} should the agents do within each epoch? (ii) \textbf{How} should the server aggregate the information received from the agents? Each of these issues needs to be dealt with delicately and, as such, requires considerable innovation. 

\(\bullet\) \textbf{Operator Refinement within Each Epoch.} Before explaining \emph{what} each agent does within each epoch, let us start with some intuition. Even in the absence of adversaries, the data available to each agent is stochastic in nature owing to the randomness in the state transitions. Thus, any object (such as a Q-table) constructed by processing such noisy data will inherit its randomness. This tells us that during the initial stages of the algorithm, \emph{the information received from different agents might appear quite different to the server because of the inherent uncertainty in our setting.} The adversarial agents can exploit this fact, making it harder for the server to distinguish between good and corrupted agents. 

To resolve the above issue, let us take a closer look at the source of randomness in a typical Q-learning update rule such as~\eqref{eqn:syncQ}. Observe that the update to the Q-table is made using an approximate version $\mc{T}_t$ of the true Bellman operator $\mc{T}^*$; moreover, $\mc{T}_t$  has high variance since it is based on just one sample, namely, the sample at time  $t$. Intuition dictates that if it were possible to make updates based on a less noisier (i.e., lower variance) estimate of $\mc{T}^*$, then the estimates of the good agents would be closer to one another. Since the good agents are in a majority (\( \varepsilon < 0.5\)), this would make it harder for the adversaries to mislead the server. 

\begin{algorithm}[t]
\caption{Median of Means Estimator \texttt{(M.o.M)}}
\label{algo:algo 1}
\begin{algorithmic}[1]
\Require Corrupted data set $\mc{X} = \{X_1, X_2, \dots, X_{{M}}\}$, corruption fraction $\varepsilon$, and confidence level $\delta$.
\State Partition $\mc{X}$ into ${P}$ disjoint buckets of size ${N} = \lfloor M/P \rfloor$,
with the $j$-th bucket denoted by $\mc{B}_j$. 
\State Compute the empirical mean for each bucket $\mc{B}_j$:
\[
\hat{\mu}_j = \frac{1}{{N}} \sum_{X_i \in \mc{\mc{B}}_j} X_i, \quad j = 1, \dots, {P}.
\]
\State Output $\tilde{\mu} = \textcolor{winered}{\texttt{Median}}\{\hat{\mu}_1, \hat{\mu}_2, \dots, \hat{\mu}_{{P}}\}.$ 
\end{algorithmic}
\end{algorithm}

Guided by the above intuition, here is our idea. Within each epoch $k \in [K]$, each good agent $i$ uses the $H$ samples acquired per state-action pair $(s,a)$ to maintain an empirical estimate of the probability transition kernel $\mc{P}(\cdot \mid s,a)$. These empirical estimates are then used to construct an empirical Bellman operator $\mc{T}_{i,k}$ with $H$-fold lower variance than the standard synchronous Q-learning algorithm. We now explain the details by fixing a good agent $i$. For each $(s,a) \in\mc{S}\times\mc{A}$, let \(\bold{1}^{(j)}(s' \mid s, a)\) denote an indicator random variable that equals 1 if a transition to state \(s'\) is observed from \((s, a)\) (i.e., if $s_t(s,a)=s'$) at the \(j^{\text{th}}\) time-step within the $k$-th epoch, and 0 otherwise.\footnote{For clarity of notation, we have suppressed the dependence of this indicator random variable on the agent index $i$, and epoch index $k$.} Using this data, agent $i$ maintains an estimate \(\hat{P}_{i,k}(s' \mid s, a) \) of $\mc{P}(s'\mid s, a)$ as follows:
\begin{equation}\label{eqn:empirical_prob}
    \hat{P}_{i,k}(s'|s,a) = \sum_{j = 1}^{H}\frac{\bold{1}^{(j)}(s'|s,a)}{H}.
\end{equation}
It is easy to see that under the synchronous sampling model, \(\hat{P}_{i,k}(s' | s, a) \) is an unbiased estimate of $\mc{P}(s' | s, a)$ with variance scaled down  by $H$. Using the estimated transition kernels, agent $i$ computes an empirical Bellman operator $\mc{T}_{i,k}: \mathbb{R}^{|\mc{S}|  \times |\mc{A}|}  \rightarrow \mathbb{R}^{|\mc{S}|  \times |\mc{A}|}$  defined as follows:
$$ 
    (\mathcal{T}_{i,k} Q)(s,a) = R(s,a) + \gamma \mathbb{E}_{s' \sim \hat{P}_{i,k}(\cdot|s,a)}\left[\max_{a' \in \mathcal{A}} Q(s',a')\right].
$$
The update direction $d_{i,k}(s,a)$ is then generated as 
\begin{equation}\label{eqn:agent-update}
    d_{i,k}(s,a) = (\mathcal{T}_{i,k} Q_k)(s,a). 
\end{equation}
For every $(s,a) \in \mc{S} \times \mc{A}$, each good agent $i$ uploads $d_{i,k}(s,a)$  to the server at the end of epoch $k$.  

\( \bullet \) \textbf{Robust Aggregation using Median-of-Means.} Now let us come to the matter of deciding how the server should aggregate the $d_{i,k}(s,a)$'s received from the $M$ agents. Two key considerations govern the choice of our robust aggregator. First, we need the output of the aggregator to concentrate tightly around the average of the uncorrupted samples (inliers) fed as input to the aggregator. Second, if each of the inliers is bounded by some finite number $B$, we would like the output to also be bounded by \emph{exactly} $B$. The second property is crucial in our analysis to ensure that the iterates generated by \texttt{Robust Fed-Q} remain uniformly bounded. Given these considerations, we tailor the median-of-means device from robust statistics to our specific needs.  

The basic Median-of-Means process is described in Algorithm~\ref{algo:algo 1}. It takes as input $M$ i.i.d. samples of a scalar real-valued random variable $X$ with mean $\mathbb{E}[X]=\mu$. A fraction $\varepsilon$ of these samples is arbitrarily corrupted. To reliably estimate $\mu$ despite outliers in the data set, the idea is to partition the $M$ samples into $P$ disjoint buckets, each containing exactly $N=\lfloor M/P \rfloor$ samples. The mean $\hat{\mu}_j$ of the samples within each bucket $j \in [P]$ is computed, and the output $\tilde{\mu}$ is the median of these means. The key design parameter is the number of buckets $P$, specified later in  Lemma~\ref{lemma:MoM}. 

\begin{algorithm}[t]
\caption{\texttt{Robust Fed-Q}}
\label{algo:Algo 2}
\begin{algorithmic}[1]
\Require Total samples $T$, confidence parameter $\delta$, and corruption fraction $\varepsilon$. 
\State Initialize $Q_0(s,a) \leftarrow 0$ for all $(s,a) \in \mc{S} \times \mc{A}.$
\For{epoch $k = 0$ to $K-1$}
\For {\((s,a) \in \mc{S} \times \mc{A}\)}
    \State Each good agent $i \in [M]$ computes \(\hat{P}_{i,k}(\cdot|s,a)\) as per \eqref{eqn:empirical_prob} and the direction $d_{i,k}(s,a)$ as per \eqref{eqn:agent-update}.
    \State All agents transmit $d_{i,k}(s,a)$ to the server.
    \State The server uses the \texttt{M.o.M.} estimator from Algorithm~\ref{algo:algo 1} with confidence parameter $\bar{\delta}$ and number of buckets $P$ chosen as per~\eqref{eqn:MoMparams} to generate $\tilde{d}_k(s,a)$ as
    \[
    \tilde{d}_k(s,a) \leftarrow \texttt{M.o.M} \left( \{d_{i,k}(s,a)\}_{i=1}^M \right). 
    \]  
    \State Server updates $Q_{k+1}(s,a)$ as per \eqref{eqn:server-update-robust}.
        \EndFor
    \State Server broadcasts $Q_{k+1}$ to all agents. 
        \EndFor
\end{algorithmic}
\end{algorithm}

We can now describe the main steps of \texttt{Robust Fed-Q} outlined in Algorithm~\ref{algo:Algo 2}. In each epoch $k \in [K]$, every  good agent $i$ computes the probability transition kernel estimates as per~\eqref{eqn:empirical_prob} and the update direction $d_{i,k}(s,a)$ as per~\eqref{eqn:agent-update}. The server applies a \texttt{M.o.M.} estimator (as in Algorithm~\ref{algo:algo 1}) to the data set $\mc{X}=\{d_{1,k}(s,a), \ldots, d_{M,k}(s,a)\}$ to compute a robust update direction $\tilde{d}_k(s,a).$ To get our desired guarantees, the confidence parameter $\bar{\delta}$ and the number of buckets $P$ for the \texttt{M.o.M.} estimator are chosen carefully as
\begin{equation}
\bar{\delta}= {\delta}/(|\mc{S}| |\mc{A}| T); \hspace{2mm} P= \ceil{8 \varepsilon M + (256/7) \log(2/\bar{\delta})},
\label{eqn:MoMparams}
\end{equation}
where $\delta \in (0,1)$ is the confidence parameter input to \texttt{Robust Fed-Q}. The above choices are informed by the analysis in Lemmas~\ref{lemma:MoM} and~\ref{lemma:drift_parameters} from Section~\ref{sec:analysis}. For each $(s,a)$, using $\tilde{d}_k(s,a)$, the server generates $Q_{k+1}(s,a)$ as follows. 
\begin{equation}\label{eqn:server-update-robust}
    Q_{k+1}(s,a) = (1 - \alpha) Q_k(s,a) + \alpha \tilde{d}_k(s,a),
\end{equation}
where $\alpha \in (0,1)$ is a constant step-size. The step-size $\alpha$ and the number of epochs $K$ will be specified later in the statement of Theorem~\ref{theorem:theoremmainresult}; see \eqref{eqn:designchoice}. For our subsequent analysis, we will assume that $M/P \geq 2.$ To meet this requirement while respecting~\eqref{eqn:MoMparams}, it suffices for $\varepsilon$ to be small enough and $M$ to be large enough such that
\begin{equation}
16 \varepsilon + \frac{512}{7M} \log(2|\mc{S}| |\mc{A}| T/{\delta}) + \frac{2}{M} < 1. 
\end{equation}
We note that similar conditions appear in~\cite{lugosi} and~\cite{yin}. 

This completes the description of our algorithm.
\begin{remark} Our approach effectively blends the model-based idea of estimating transition kernels with the model-free Q-learning rule in~\eqref{eqn:syncQ}. Furthermore, unlike standard model-free FRL algorithms where agents update their Q tables at each local step within an epoch (or round), our approach is fundamentally different in that the Q tables are never updated within an epoch; instead samples acquired during an epoch are used for estimating the Bellman operator. 
\end{remark}
\section{Main Result}
\label{sec:mainresult}
Let us define the error in the $k$-th epoch as $e_k:= \Vert Q_k - Q^* \Vert_{\infty}$. Our main result for \texttt{Robust Fed-Q} is as follows. 
\begin{theorem}\label{theorem:theoremmainresult}(\textbf{Main Convergence Result}) Given any confidence parameter $\delta \in (0,1)$, suppose the step-size $\alpha$ and the number of epochs $K$ in Algorithm~\ref{algo:Algo 2} be chosen as follows:
\begin{equation} 
\alpha = \frac{\log(MT)}{(1 - \gamma)K}; \hspace{2mm} K = \ceil{ c_1\log(MT)/(1-\gamma)},
\label{eqn:designchoice}
\end{equation}
where $c_1$ is a universal constant chosen to ensure $\alpha < 1.$ Then after $K$ epochs, the output $Q_K$ of Algorithm~\ref{algo:Algo 2} satisfies the following with probability at least \( 1 - \delta \):
\begin{equation} 
\begin{aligned}
e_K &\le \frac{e_0}{MT} +  O\left( \frac{\bar{R} \sqrt{\log(MT) \log\left(\frac{2 |\mathcal{S}||\mathcal{A}| T}{\delta}\right)}}{(1 - \gamma)^{2.5} \sqrt{MT}} \right)\\
&\hspace{1mm} + O\left(\frac{\bar{R} \sqrt{\varepsilon \log(MT)}}{(1-\gamma)^{2.5}\sqrt{T}}\right).
\end{aligned}
\label{eqn:main_conv_bnd}
\end{equation}
\end{theorem}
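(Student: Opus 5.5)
We will write the server update as a perturbed contraction. Define the noise term $\xi_k(s,a) := \tilde d_k(s,a) - (\mc{T}^* Q_k)(s,a)$, so that, using $\mc{T}^*Q^*=Q^*$, the update~\eqref{eqn:server-update-robust} becomes
\[
Q_{k+1}-Q^* = (1-\alpha)(Q_k-Q^*) + \alpha\bigl(\mc{T}^*Q_k - \mc{T}^*Q^*\bigr) + \alpha \xi_k .
\]
Taking $\ell_\infty$ norms and applying the contraction property~\eqref{eqn:Bellmancontraction} gives the scalar recursion
\[
e_{k+1} \le \bigl(1-\alpha(1-\gamma)\bigr)e_k + \alpha \Vert \xi_k\Vert_\infty .
\]
Unrolling this over $K$ epochs yields
\[
e_K \le \bigl(1-\alpha(1-\gamma)\bigr)^K e_0 + \frac{\max_k \Vert\xi_k\Vert_\infty}{1-\gamma}.
\]
With the choice~\eqref{eqn:designchoice} we have $\alpha(1-\gamma)K = \log(MT)$, so the first term is at most $e^{-\log(MT)}e_0 = e_0/(MT)$. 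The remaining work is to bound $\max_k\Vert\xi_k\Vert_\infty$ uniformly with probability $1-\delta$.

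\textbf{Step 1: uniform boundedness of the iterates.} We show by induction that $\Vert Q_k\Vert_\infty \le \bar R/(1-\gamma)$. The base case holds since $Q_0=0$. For the inductive step, every good agent's direction is $d_{i,k}(s,a)=R(s,a)+\gamma\,\mathbb{E}_{\hat P_{i,k}}[\max_{a'} Q_k]$, which lies in $[-\bar R/(1-\gamma), \bar R/(1-\gamma)]$. We then need the property, emphasized in Section~\ref{sec:Algorithm}, that the M.o.M.\ output stays within the inlier bound. On the event where the median is attained between uncorrupted bucket means (which holds when more than half the buckets are clean, and is guaranteed by $P\ge 8\varepsilon M$), $\tilde d_k$ lies in the range of the clean bucket means. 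Hence $|\tilde d_k|\le \bar R/(1-\gamma)$, and since $Q_{k+1}$ is a convex combination of $Q_k$ and $\tilde d_k$, the bound propagates to $Q_{k+1}$.

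\textbf{Step 2: concentration via Lemma~\ref{lemma:MoM}.} Fix $k$ and $(s,a)$, and condition on $Q_k$. Since $Q_k$ depends only on data from earlier epochs, the good agents' $d_{i,k}(s,a)$ are i.i.d. Each $d_{i,k}(s,a)$ is an average of $H$ i.i.d.\ terms with mean $(\mc{T}^*Q_k)(s,a)$, each bounded by $\bar R/(1-\gamma)$. Their variance is therefore
\[
\sigma^2 \le \frac{\gamma^2 \bar R^2}{(1-\gamma)^2 H}.
\]
Applying Lemma~\ref{lemma:MoM} with the bucket count $P$ from~\eqref{eqn:MoMparams} and confidence $\bar\delta$ (the standard MoM bound with adversarial corruption, with deviation of order $\sigma\sqrt{P/M}$), we get with probability $1-\bar\delta$
\[
|\xi_k(s,a)| \lesssim \sigma\Bigl(\sqrt{\varepsilon} + \sqrt{\log(2/\bar\delta)/M}\Bigr).
\]
A union bound over $(s,a)$ and the $K\le T$ epochs brings the total failure probability down to $\delta$; this is exactly the reason for the choice $\bar\delta=\delta/(|\mc{S}||\mc{A}|T)$.

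\textbf{Step 3: assembling the bound.} Substituting $H=T/K$ and $K=\Theta(\log(MT)/(1-\gamma))$ gives
\[
\sigma \lesssim \frac{\bar R\sqrt{\log(MT)}}{(1-\gamma)^{1.5}\sqrt T}.
\]
Multiplying by the factor $1/(1-\gamma)$ from the unrolled recursion gives the $(1-\gamma)^{-2.5}$ dependence. The $\sqrt{\log(2/\bar\delta)/M}$ term then produces the $1/\sqrt{MT}$ term in~\eqref{eqn:main_conv_bnd}, and the $\sqrt\varepsilon$ term produces the corruption term.

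\textbf{Main obstacle.} The delicate step is the interplay between conditioning and the union bound. The variance bound in Step 2 needs $\Vert Q_k\Vert_\infty$ bounded, which holds only on the high-probability event from earlier epochs. We will handle this by defining the good event $\mc{E}$ as the intersection over all $(k,s,a)$ of the MoM success events, and proving the boundedness of Step 1 and the concentration of Step 2 jointly by induction on $k$ within $\mc{E}$. Within $\mc{E}$, we must also verify that Lemma~\ref{lemma:MoM} (as presumably stated) both gives the deviation bound and guarantees that the median lies between clean bucket means; the latter holds because fewer than $P/8$ buckets are corrupted. If the lemma is stated only for a deviation bound, we will derive the range property directly from the counting argument used in the median step.
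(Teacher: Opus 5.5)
Your proposal is correct and follows the paper's proof essentially step for step: the same perturbed-contraction recursion, deterministic boundedness of the iterates via the clean-bucket median property, conditional M.o.M.\ concentration with a union bound over $(s,a)$ and $K\le T$ epochs, and then the substitutions $\alpha(1-\gamma)K=\log(MT)$ and $H=T/K$. The conditioning issue you flag as the main obstacle does not actually arise, because $P>2\varepsilon M$ always holds, so your Step~1 bound is deterministic and one can condition on $\mathcal{F}_{k-1}$ directly as the paper does. Your variance-based use of M.o.M.\ is also fine, since the proof of Lemma~\ref{lemma:MoM} only uses a second-moment (Chebyshev) bound.
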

We defer the proof of Theorem \ref{theorem:theoremmainresult} to Section \ref{sec:analysis}.\\
\textbf{Discussion.} Theorem~\ref{theorem:theoremmainresult} provides a \emph{finite-time convergence guarantee} for robust multi-agent Q-learning in the presence of adversarial corruption. Despite adversarial corruption, the algorithm ensures that the learned Q-function \( Q_K \) remains close to the optimal Q-function \( Q^* \) with high probability. The bound on the error \( e_K \) in~\eqref{eqn:main_conv_bnd} comprises three terms: the first two terms capture the behavior of our algorithm in the absence of adversaries, and the third $O(\sqrt{\varepsilon})$ term captures the effect of adversarial corruption. In what follows, we discuss each of these terms in detail.

\(\bullet\) \textbf{Near Optimal Statistical Rates.} When there is no corruption, i.e., $\varepsilon =0$, the overall convergence rate of our algorithm is \( \tilde{O}\left({1}/{\sqrt{MT}}\right) \). When $M=1$, this rate is consistent with existing single-agent Q-learning bounds in~\cite{Qu, Waiwright}. Furthermore, our result also recovers the optimal $1/\sqrt{MT}$ guarantee in federated Q-learning with $M$ agents~\cite{woo2023blessing}, demonstrating the benefits of collaboration.

\(\bullet\) \textbf{{Vanishing Corruption Effect}.} The third term in~\eqref{eqn:main_conv_bnd} on the order of 
\( \tilde{O}\left({ \sqrt{\varepsilon}}/{\sqrt{T}}\right) \) quantifies the additional error introduced by the presence of corrupted agents, and scales with the corruption fraction \( \varepsilon \). Such an additive corruption term $\sqrt{\varepsilon}$ is typical in robust mean estimation with outliers~\cite{lugosi} and robust distributed supervised learning as well~\cite{yin, chensu}. The most distinctive feature of this term is that it diminishes with the number of samples $T$; in other words, \textbf{in the limit of infinite samples $T$, the contribution of the adversarial agents can be completely eliminated.} This is a major finding of our paper, made possible by the strategy of \emph{operator refinement} within each epoch. Doing so ensures that the error due to corruption in the Q-value update is on the order of $O(\sqrt{\varepsilon}/\sqrt{H})$ in each epoch, where $H$ is the length of the epoch;  see Lemma~\ref{lemma:drift_parameters}. Given the choice of $K$ in~\eqref{eqn:designchoice}, and the fact that $T=KH$, observe that $H$ is essentially on the order of $T$. This explains why the error due to corruption in each epoch is mitigated via a larger $T$. 
\\
\(\bullet\) \textbf{{Constant Communication}.} Finally, since the number of times the agents communicate is precisely the number of epochs $K$, observe from~\eqref{eqn:designchoice} that \texttt{Robust Fed-Q} requires just $O(\log(MT)/(1-\gamma))$ rounds of communication. Thus, \textbf{not only does} \texttt{Robust Fed-Q} \textbf{achieve near-optimal finite-time guarantees under worst-case adversaries, it does so with merely $\tilde{O}(1)$ communication rounds.}
\begin{figure}[t]
\begin{center}
\begin{tabular}{cc}
   \hspace{-6 mm}\includegraphics[scale=0.15]{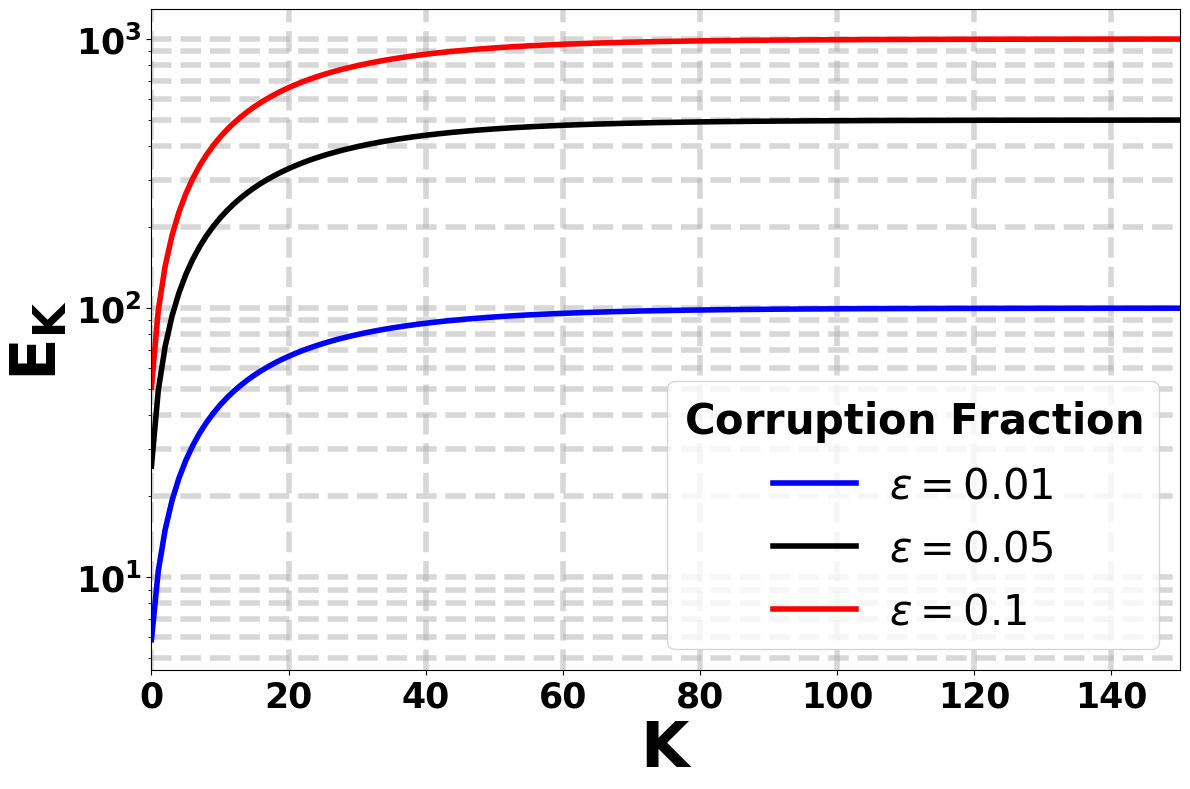}&\hspace{-4 mm}\includegraphics[scale=0.15]{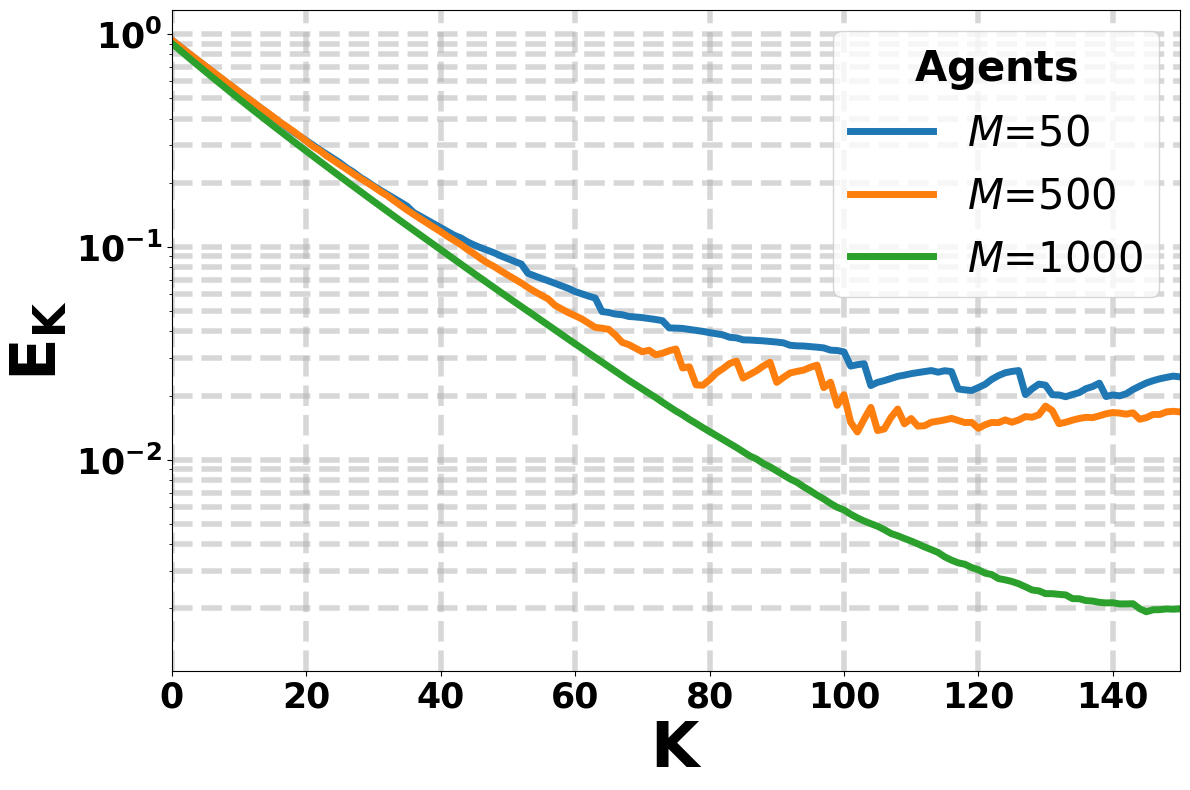}
    \end{tabular}
\vspace{-4mm}
\end{center}
\caption{\textbf{(Left)} Plots of the $\ell_\infty$ error $E_K = \lVert Q_K - Q^* \rVert_{\infty}$ for $M=1000$ and $\varepsilon \in \{0.01,0.05,0.1\}$ as a function of the number of epochs \(K\) for a vanilla federated Q-learning algorithm, where the central server simply averages the agent updates.  \textbf{(Right)} Plots of $E_K$ for \texttt{Robust Fed-Q}, with corruption fraction $\varepsilon = 0.1$ and agents $M \in \{50,500,1000\}$.}
\label{fig:sim}
\vspace{-5mm}
\end{figure}
\section{Simulation Results}
We evaluate the performance of Algorithm \ref{algo:Algo 2} on a synthetic grid-world environment with $10$ states, $5$ actions, discount factor $\gamma = 0.5$, and rewards drawn from $[0,1]$. For our simulations, every adversarial agent injects a fixed bias of $10^4$. With the step-size set to $\alpha = 0.1$, time-steps to $T = 25000$, and the confidence parameter to $\delta = 0.05$, we report our observations in Fig.~\ref{fig:sim}. Our simulations reveal that (i) a vanilla federated Q-learning algorithm that performs naive averaging can incur large errors under corruption; and (ii) \texttt{Robust Fed-Q} continues to guarantee convergence to a ball around $Q^*$, where the size of the ball reduces by increasing $M$, thus complying with our theory. 

\section{Analysis}\label{sec:analysis}
In this section, we provide a detailed finite-time analysis of our proposed algorithm \texttt{Robust Fed-Q}. To get started, we perform a simple error decomposition using the robust Q-learning update rule in~\eqref{eqn:server-update-robust}: 
\begin{equation}\label{eqn:decomp}\medmath{
\begin{aligned}
    &Q_{k+1} - Q^* = (1 - \alpha)(Q_k - Q^*) + \alpha (\tilde{d}_k - Q^*), \\
    &= (1 - \alpha)(Q_k - Q^*) + \alpha (\mathcal{T}^* Q_k - \mathcal{T}^* Q^*) + \alpha (\tilde{d}_k - \mathcal{T}^* Q_k),
\end{aligned}}
\end{equation}
where we used the fact that $\mathcal{T}^* Q^*=Q^*.$ In the absence of the third term $\alpha (\tilde{d}_k - \mathcal{T}^* Q_k)$ in the above decomposition, one can simply use contractivity of the Bellman optimality operator to complete the analysis. In our case, the bulk of the work lies in bounding $(\tilde{d}_k - \mathcal{T}^* Q_k)$ which contains both statistical errors (due to sampling), and errors due to adversarial corruption. We start with a very simple result concerning the Median-of-Means Estimator \texttt{M.o.M.} in Algorithm~\ref{algo:algo 1}. 
\begin{lemma}(\textbf{M.o.M. under Bounded Inliers})
\label{lemma:inlier_bounded}
Consider the \texttt{M.o.M.} estimation setting in Algorithm~\ref{algo:algo 1}. Suppose each uncorrupted sample $X_i$ (i.e., inlier) in the data set $\mc{X}$ satisfies $|X_i| \leq B$ for some finite $B >0.$ If $\varepsilon M < P/2$, then the output $\tilde{\mu}$ of the \texttt{M.o.M} estimator also satisfies $|\tilde{\mu}| \leq B.$
\end{lemma}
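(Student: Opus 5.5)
The argument is a short pigeonhole count followed by the standard order-statistics property of the median.

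\emph{Counting clean buckets.} At most $\varepsilon M$ samples are corrupted. Each corrupted sample lies in at most one of the $P$ disjoint buckets, so at most $\varepsilon M$ buckets contain any corrupted sample. Call a bucket \emph{clean} if all of its $N$ samples are inliers. Then the number of clean buckets is at least $P - \varepsilon M$. The hypothesis $\varepsilon M < P/2$ makes this strictly greater than $P/2$. Samples left over when $N\lfloor M/P\rfloor$-partitioning (if $P$ does not divide $M$) are simply unused, so they do not affect the count.

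\emph{Clean bucket means are bounded.} For a clean bucket $\mc{B}_j$, the triangle inequality gives
\[
|\hat{\mu}_j| \le \frac{1}{N}\sum_{X_i\in\mc{B}_j}|X_i| \le B .
\]
So strictly more than half of the bucket means $\hat{\mu}_1,\dots,\hat{\mu}_P$ lie in $[-B,B]$.

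\emph{The median lies in $[-B,B]$.} Sort the bucket means. Suppose the median $\tilde{\mu}$ satisfied $\tilde{\mu} > B$. By the definition of the median, at least $\lceil P/2\rceil$ of the values would be $\ge \tilde{\mu} > B$. That leaves at most $P-\lceil P/2\rceil \le P/2$ values in $[-B,B]$, which contradicts having strictly more than $P/2$ of them. The case $\tilde{\mu}<-B$ is symmetric.

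The only point needing care is the median convention for even $P$. If the median is defined as the average of the two middle order statistics, both middle values lie in $[-B,B]$ by the same counting, because fewer than $P/2$ values lie outside that interval. Their average therefore also lies in $[-B,B]$, and so $|\tilde{\mu}|\le B$ holds under either convention.
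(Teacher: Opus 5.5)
Your proof is correct and follows the same route as the paper: you count at least $P-\varepsilon M > P/2$ clean buckets, bound each clean-bucket mean by $B$ via the triangle inequality, and conclude that the median lies in $[-B,B]$. The paper phrases the last step as sandwiching $\tilde{\mu}$ between two clean-bucket means, which is equivalent to your contradiction count, and your explicit handling of the even-$P$ median convention is a useful extra bit of care.
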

\begin{proof}
Consider any ``good" bucket $\mc{B}_j$ that contains no corrupted data samples. Since  \(\hat{\mu}_j := \frac{1}{|\mathcal{B}_j|} \sum_{i \in \mathcal{B}_j} X_i\), it is clear that if each $X_i$ within this bucket has magnitude at most $B$, then $|\hat{\mu}_j| \leq B$. Now, since at most $\varepsilon M$ samples can be corrupted, the number of such good buckets is at least $P - \varepsilon M > P/2,$ where we used $\varepsilon M < P/2$. Recall that $\tilde{\mu} = \texttt{Median}\{\hat{\mu}_1, \hat{\mu}_2, \dots, \hat{\mu}_{{P}}\}$. From the definition of the median and the fact that there are at least $P/2$ good buckets, we infer that there must exist good buckets $\mc{B}_{j_1}$ and $\mc{B}_{j_2}$ such that $- B \leq \hat{\mu}_{j_1} \leq  \tilde{\mu} \leq \hat{\mu}_{j_2} \leq B$. The claim of the lemma follows directly from the above observation. 
\end{proof}
\vspace{-1mm}
Using Lemma~\ref{lemma:inlier_bounded}, we now proceed to show that the iterates generated by \texttt{Robust Fed-Q} are uniformly bounded. 

\begin{lemma} (\textbf{Boundedness of Iterates}) 
\label{lemma:Uniformbound} The following is true for the iterates generated by Algorithm \ref{algo:Algo 2}: 
\begin{equation}
 \lvert Q_{k}(s,a) \rvert \le \frac{\bar{R}}{1-\gamma}, \forall (s,a) \in \mathcal{S} \times \mathcal{A}, \forall k \geq 0,
 \label{eqn:Q_bound}
\end{equation}
where recall that $|R(s,a)| \leq \bar{R}, \forall (s,a) \in \mc{S} \times \mc{A}.$
\end{lemma}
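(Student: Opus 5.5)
We argue by induction on $k$, with Lemma~\ref{lemma:inlier_bounded} carrying the inductive step. The base case is immediate because $Q_0 \equiv 0$. For the inductive step, assume $\lVert Q_k \rVert_\infty \le \bar{R}/(1-\gamma)$ and fix any $(s,a) \in \mc{S}\times\mc{A}$.

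We first bound the inliers fed to the aggregator. For every good agent $i$, $\hat{P}_{i,k}(\cdot\mid s,a)$ is a genuine probability distribution on $\mc{S}$: by~\eqref{eqn:empirical_prob} its entries are nonnegative averages of indicators, and they sum to one because each time-step produces exactly one next state. Hence
\[
|d_{i,k}(s,a)| \le |R(s,a)| + \gamma \max_{s'}\Big|\max_{a'} Q_k(s',a')\Big| \le \bar{R} + \gamma \frac{\bar{R}}{1-\gamma} = \frac{\bar{R}}{1-\gamma}.
\]
So every uncorrupted sample in $\{d_{i,k}(s,a)\}_{i=1}^M$ is bounded by $B := \bar{R}/(1-\gamma)$.

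Next we check the hypothesis $\varepsilon M < P/2$ of Lemma~\ref{lemma:inlier_bounded}. This follows from the choice in~\eqref{eqn:MoMparams}, since $P \ge 8\varepsilon M + (256/7)\log(2/\bar{\delta}) > 2\varepsilon M$; the logarithmic term is strictly positive because $\bar{\delta} < 1$. The case $\varepsilon = 0$ is also fine, because then there are no corrupted samples at all. Applying Lemma~\ref{lemma:inlier_bounded} gives $|\tilde{d}_k(s,a)| \le B$ deterministically. This holds for every realization of the sampling and for every possible adversarial behavior, so no probabilistic argument is needed.

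Finally, since $\alpha \in (0,1)$, the update~\eqref{eqn:server-update-robust} is a convex combination of $Q_k(s,a)$ and $\tilde{d}_k(s,a)$. Both have magnitude at most $B$, so $|Q_{k+1}(s,a)| \le (1-\alpha)B + \alpha B = B$, which completes the induction. None of these steps is a real obstacle. The only points that need care are verifying that $\hat{P}_{i,k}$ is a true probability vector, so the expectation cannot inflate the bound, and checking the bucket condition $\varepsilon M < P/2$.
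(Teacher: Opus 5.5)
Your proposal is correct and follows the paper's proof essentially step for step: induction from $Q_0 \equiv 0$, bounding each good agent's $d_{i,k}(s,a)$ by $\bar{R}/(1-\gamma)$, invoking Lemma~\ref{lemma:inlier_bounded} via $P > 2\varepsilon M$ from~\eqref{eqn:MoMparams}, and closing with the convex-combination update~\eqref{eqn:server-update-robust}. Your explicit checks that $\hat{P}_{i,k}(\cdot\mid s,a)$ is a genuine probability vector and that the bound holds deterministically are details the paper uses only implicitly.
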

\begin{proof}
We will prove this result via induction. Since \(\bar{R} \ge 1\), and \(Q_0(s,a) = 0, \forall (s,a) \in \mc{S} \times \mc{A}\) in Algorithm \ref{algo:Algo 2}, \eqref{eqn:Q_bound} holds trivially for \( k=0 \). Now suppose the bound in~\eqref{eqn:Q_bound} holds for all epochs up to epoch $k$. We need to show that the same bound applies to $Q_{k+1}(s,a).$ To that end, fix a state-action pair $(s,a) \in \mc{S} \times \mc{A},$ and let us recall how $Q_{k+1}(s,a)$ is generated. In epoch $k$, each good agent $i$ generates $d_{i,k}(s,a)$ as per~\eqref{eqn:agent-update}. The server then constructs a robust estimate $\tilde{d}_k(s,a)$ by applying a \texttt{M.o.M.}  estimator to the data set $\mc{X} = \{d_{1,k}(s,a), d_{2,k}(s,a), \ldots, d_{M,k}(s,a)\},$ and $Q_{k+1}(s,a)$ is subsequently updated using $\tilde{d}_k(s,a)$ as per~\eqref{eqn:server-update-robust}. For each good agent $i \in [M]$, we have from \eqref{eqn:agent-update}:
\begin{equation}
\begin{aligned}
    \lvert d_{i,k}(s,a) \rvert &\leq \lvert R(s,a) \rvert + \gamma \mathbb{E}_{s' \sim \hat{P}_{i,k}(\cdot|s,a)} \lvert \max_{a' \in \mathcal{A}} Q_k(s',a') \rvert \\
    &\leq \bar{R} + \gamma \frac{\bar{R}}{1-\gamma} = \frac{\bar{R}}{1-\gamma}, \nonumber
\end{aligned}
\end{equation}
where in the second step, we used the induction hypothesis and the fact that $|R(s,a)| \leq \bar{R}.$ We conclude that each inlier in the data set $\mc{X} = \{d_{1,k}(s,a), d_{2,k}(s,a), \ldots, d_{M,k}(s,a)\}$ has magnitude at most $\bar{R}/(1-\gamma).$ Furthermore, from the definition of the number of buckets $P$ in Algorithm~\ref{algo:Algo 2}, we have that $P > 2 \varepsilon M.$ Invoking Lemma~\ref{lemma:inlier_bounded} then tells us that $|\tilde{d}_k(s,a)| \leq \bar{R}/(1-\gamma).$ From~\eqref{eqn:server-update-robust}, we then have 
\begin{equation}
\begin{aligned}
    &\lvert Q_{k+1}(s,a) \rvert \leq (1-\alpha)\lvert Q_k(s,a) \rvert + \alpha \lvert \tilde{d}_k(s,a) \rvert, \\
    &\leq (1-\alpha) \frac{\bar{R}}{1-\gamma} + \alpha \frac{\bar{R}}{1-\gamma} = \frac{\bar{R}}{1-\gamma},
    \end{aligned}
\end{equation}
where we once again used the induction hypothesis. This completes the induction claim. 
\end{proof}
Next, we establish high-probability concentration bounds for the \texttt{M.o.M.} estimator in Algorithm~\ref{algo:algo 1} by carefully exploiting properties of sub-Gaussian random variables.\footnote{A random variable $X \in \mathbb{R}$ is said to be sub-Gaussian with variance proxy $\sigma^2$ (or $\sigma$-sub-Gaussian) if its moment-generating function satisfies $\mathbb{E}[\exp(sX)] \leq \exp(s^2 \sigma^2/2), \forall s\in \mathbb{R}$~\cite{rigollet2023high}.} 

\begin{lemma}(\textbf{High-Probability Guarantees for M.o.M. under Adversarial Contamination})\label{lemma:MoM} Consider the M.o.M. estimation setting in Algorithm~\ref{algo:algo 1}, where the corrupted data set \(\mc{X} \triangleq \{ X_1, X_2, \dots, X_{{M}}\}\) comprises $M$ i.i.d. samples of a scalar random variable $X$, of which, at most $\varepsilon M$ samples are arbitrarily corrupted. Let $\mu= \mathbb{E}[X]$, and suppose each uncorrupted sample $i$ is such that $(X_i - \mu)$ is $B$-sub-Gaussian for some finite $B >0.$ Given any $\delta \in (0,1)$, suppose the number of buckets $P$ in the M.o.M. estimator be chosen as follows: $P = \ceil{8 \varepsilon M + (256/7) \log(2/\delta)}.$
Then, the output $\tilde{\mu}$ of the \texttt{M.o.M.} procedure in Algorithm~\ref{algo:algo 1} satisfies the following bound with probability at least $1-\delta$:  
    \begin{equation}
       \lvert \tilde{\mu} - \mu \rvert \leq c \cdot B  \left(\sqrt{\frac{\log(2/\delta)}{{M}}} + \sqrt{\varepsilon}\right),
    \end{equation}
    where \( c > 1 \) is a universal constant.
\end{lemma}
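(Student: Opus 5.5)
The plan is the standard median-of-means argument: show that more than half of the bucket means lie within a radius $r$ of $\mu$, so the median does too. Fix the radius
\[
r = c' B\sqrt{P/M},
\]
for a constant $c'$ chosen below. Call a bucket \emph{good} if it contains no corrupted sample. Since at most $\varepsilon M$ samples are corrupted and $P \ge 8\varepsilon M$, at most $\varepsilon M \le P/8$ buckets are bad.

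\textbf{Step 1 (single good bucket).} For a good bucket, $\hat{\mu}_j - \mu$ is an average of $N$ independent $B$-sub-Gaussian variables, hence $(B/\sqrt{N})$-sub-Gaussian. Therefore
\[
\Pr\bigl(|\hat{\mu}_j - \mu| > r\bigr) \le 2\exp\bigl(-r^2 N/(2B^2)\bigr).
\]
We have $N = \lfloor M/P \rfloor \ge M/(2P)$, using $M/P \ge 2$. Choosing $c'$ large enough makes this probability at most some $p_0 \le 1/8$.

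\textbf{Step 2 (counting bad events).} The good buckets are disjoint sets of inliers, so the indicators $\mathbf{1}\{|\hat{\mu}_j - \mu| > r\}$ are independent Bernoulli($\le p_0$) variables. By Hoeffding, the number of good buckets that fail exceeds $(p_0 + t)P$ with probability at most $\exp(-2t^2 P)$. Take $t$ so that $p_0 + t + 1/8 < 1/2$, for example $p_0 = 1/8$ and $t$ a small constant. The constant $256/7$ in $P$ is presumably exactly what makes $\exp(-2t^2 P) \le \delta/2$, or $\le \delta$, given $P \ge (256/7)\log(2/\delta)$. On this event, strictly fewer than $P/2$ buckets are bad or failed. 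Hence more than half the bucket means lie in $[\mu - r, \mu + r]$, and so does the median $\tilde{\mu}$.

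One technical point: the adversary is omniscient, so it can choose which samples to corrupt after seeing the data. To handle this, I would apply the argument to all $P$ buckets of the clean i.i.d. sample, before corruption. Then at most $P/8$ of these clean bucket means fail, and corruption can alter at most $\varepsilon M \le P/8$ further buckets. This makes the bound uniform over the adversary's choice.

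\textbf{Step 3 (plugging in $P$).} Using $P \le 8\varepsilon M + (256/7)\log(2/\delta) + 1$ and $\sqrt{a+b} \le \sqrt{a} + \sqrt{b}$, we get
\[
r \le c' B\left(\sqrt{8\varepsilon} + \sqrt{(256/7)\log(2/\delta)/M} + \sqrt{1/M}\right).
\]
The $\sqrt{1/M}$ term is absorbed into the logarithmic term since $\log(2/\delta) \ge \log 2$. This gives the claim with a universal constant $c > 1$.

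\textbf{Main obstacle.} The delicate step is the constant bookkeeping in Step 2. The budget $1/2$ must be split between the adversarial buckets ($\le 1/8$), the mean-failure probability $p_0$, and the Hoeffding deviation $t$, in a way that matches the prescribed $8\varepsilon M$ and $256/7$ factors. I would first try $p_0 = 1/8$ and solve for $t$; if that does not match, I would fall back on a Chernoff bound instead of Hoeffding. In any case, only the universal constant $c$ is affected.
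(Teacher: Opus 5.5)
Your proposal is correct and follows essentially the same route as the paper: constant-probability concentration for each uncontaminated bucket, Hoeffding over the bucket-failure indicators, at most $\varepsilon M \le P/8$ contaminated buckets, and finally $\sqrt{P/M} = O\bigl(\sqrt{\varepsilon} + \sqrt{\log(2/\delta)/M}\bigr)$. The differences are minor. You use the sub-Gaussian tail where the paper uses a second-moment (Chebyshev) bound, and a single two-sided count where the paper uses two one-sided bounds plus a union bound. Your constants also close: taking $p_0 = t = 1/8$ gives $p_0 + t + 1/8 = 3/8 < 1/2$ and $\exp(-P/32) \le \delta/2$ from the prescribed $P$. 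Finally, your coupling to the clean sample justifies the independence of the good-bucket indicators under an omniscient adversary, which the paper takes for granted.
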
  
\begin{proof}
To provide a guarantee on \(\tilde{\mu}\), we recall from the description of Algorithm~\ref{algo:algo 1} that the data set $\mc{X}$ is partitioned into $P$ disjoint buckets, with each bucket containing $N=\lfloor {M}/{P} \rfloor$ samples. We first aim to establish probabilistic guarantees for the buckets that do not contain any corrupted samples (referred to as \emph{good buckets}). Let the set of all such buckets be denoted by \(\mathcal{B}^g\). Now, for a particular bucket \(\mathcal{B}_j \in \mathcal{B}^g\) with no contamination, we define the following:
\[
Y \triangleq \hat{\mu}_j - \mu = \left( \frac{1}{|\mc{B}_j|}\sum_{i \in \mathcal{B}_j} X_i - \mu \right) = \frac{1}{|\mc{B}_j|} \sum_{i \in \mathcal{B}_j} \underbrace{( X_i - \mu)}_{Y_i}.
\]
Due to the i.i.d. nature of the samples, we have \(\mathbb{E}[X_i] = \mu\), $\forall i \in \mc{B}_j$,  implying \(\mathbb{E}[Y_i] = 0\), where $Y_i := X_i - \mu$. Furthermore, by assumption, $Y_i$ is a $B$-sub-Gaussian random variable. Since \(Y\) is the average of $N$ i.i.d. sub-Gaussian random variables, each with variance proxy $B^2$, it follows that \(Y\) is itself sub-Gaussian with variance proxy $B^2/N$ \cite[Lemma 5.4]{csabaasymp}. 
Hence, for any $\lambda >0$, we can apply Markov's inequality to bound the probability \( \mathbb{P}(|Y| \ge \lambda)\) as
\begin{equation}\label{eqn:confidence}
\begin{aligned}
    \mathbb{P}(|Y| \ge \lambda) 
    = \mathbb{P}(|Y|^2 \ge \lambda^2) 
    \le \frac{\mathbb{E}[|Y|^2]}{\lambda^2} 
    \le \frac{4{B}^2}{{N} \lambda^2},
\end{aligned}
\end{equation}
where we used the fact that \(Y\) is sub-Gaussian with variance proxy $B^2/N$, implying \(\mathbb{E}[Y^2] \le 4B^2/N\) using \cite[Lemma 1.4]{rigollet2023high}. Now, setting the R.H.S. of \eqref{eqn:confidence} to a desired confidence level $1/4$, we get $\lambda = 4B/\sqrt{N}$. Next, note that $N = \floor{M/P} \ge M/P - 1 \overset{(*)}{\ge} M/(2P)$, 
where for \((*)\), we used $M \geq 2P.$ Using the above fact along with $\lambda = 4B/\sqrt{N}$ in~\eqref{eqn:confidence}, we obtain the following bound for the sample mean $\hat{\mu}_j$ of the \emph{good} bucket $\mc{B}_j$: 
\begin{equation}
    \mathbb{P} \left( \Big\lvert \mu - \hat{\mu}_j \Big\rvert \ge 4 B\sqrt{\frac{2{P}}{{M}}} \right) \le \frac{1}{4}.
\label{eqn:good_bucket_prob}
\end{equation}

To translate the above guarantees to the median $\tilde{\mu}$, 
we define an indicator random variable \(\mathcal{Z}_j\) for each uncontaminated bucket \(\mathcal{B}_j \in \mathcal{B}^g\), where
\[
\mathcal{Z}_j = 
\begin{cases}
1 & \text{if } \hat{\mu}_j 
\geq \mu + 4B\sqrt{\frac{2{P}}{{M}}}, \\
0 & \text{otherwise}. 
\end{cases}
\]
We then have: 
\begin{equation}\label{eqn:transition_med_to_mean}
    \mathbb{P}\left(\tilde{\mu} \ge \mu + 4B\sqrt{\frac{2{P}}{{M}}}\right) \overset{(*)}{\le} \mathbb{P} \left(\sum_{j \in \mc{B}^g} \mc{Z}_j \ge {P}/2 -\varepsilon {M}\right). 
\end{equation}
We now justify the inequality \((*)\) in~\eqref{eqn:transition_med_to_mean}. First, note that the condition $\tilde{\mu} \ge \mu + 4B\sqrt{2P/M}$ implies that at least \({P}/2\) buckets have sample means greater than \(\mu + 4B\sqrt{2P/M}\). Given that at most \(\varepsilon {M}\) samples are corrupted, the number of corrupted buckets is at most \(\varepsilon M\). Therefore, there must be at least \(({P}/2 - \varepsilon M)\) uncontaminated (i.e., \emph{good}) buckets whose means exceed \(\mu + 4B\sqrt{2P/M}\), justifying~\eqref{eqn:transition_med_to_mean}. Next, let $\tilde{N}$ denote the number of good buckets, and observe
\begin{equation}\label{eqn:pre-final_mom}
\begin{aligned}
&\left(\sum_{j \in \mc{B}^g} \mc{Z}_j \ge {P}/2 -\varepsilon M \right) \\
&\overset{(a)}\implies
 \left(\frac{1}{\tilde{N}}\sum_{j \in \mc{B}^g}\mc{Z}_j - \mathbb{E}[\mc{Z}_j] \ge \frac{{P}/2 -\varepsilon M}{\tilde{N}}-\frac{1}{4} \right)\\
 &\overset{(b)}\implies
 \left(\frac{1}{\tilde{N}}\sum_{j \in \mc{B}^g}\mc{Z}_j - \mathbb{E}[\mc{Z}_j] \ge \frac{1}{4} - \frac{\varepsilon {M}}{P} \right)\\
 &\overset{(c)}\implies
 \left(\frac{1}{\tilde{N}}\sum_{j \in \mc{B}^g}\mc{Z}_j - \mathbb{E}[\mc{Z}_j] \ge \frac{1}{8}\right).\\
\end{aligned}
\end{equation}
In the above steps, for (a), we used the definition of $\mc{Z}_j$ and~\eqref{eqn:good_bucket_prob} to infer that $\mathbb{E}[\mc{Z}_j] \leq 1/4.$ For (b), we used $\tilde{N} \leq P$, and for (c), we picked the number of buckets $P$ to satisfy $P \geq 8 \varepsilon M$, implying $\tilde{N} \geq P - \varepsilon M \geq (7/8) P$. Since the $0-1$ indicator random variables $\mc{Z}_j, j \in \mc{B}^g$, are independent, we can use Hoeffding's inequality along with equations~\eqref{eqn:transition_med_to_mean},~\eqref{eqn:pre-final_mom}, and $\tilde{N} \geq (7/8) P$ to obtain
\begin{equation}
\begin{aligned}
& \mathbb{P}\left(\tilde{\mu} \ge \mu + 4B\sqrt{\frac{2{P}}{{M}}}\right) \leq \mathbb{P}\left(\frac{1}{\tilde{N}}\sum_{j \in \mc{B}^g}\mc{Z}_j - \mathbb{E}[\mc{Z}_j] \ge \frac{1}{8}\right) \\
&\leq \exp(-\tilde{N}/32)  \overset{(a)}\leq \exp(-7P/256) \leq \delta/2,
\end{aligned}
\end{equation}
provided $P$ is chosen to satisfy $P \geq (256/7) \log(2/\delta).$ Since we also require $P$ to satisfy $P \geq 8 \varepsilon M$, the number of buckets can be chosen as follows: 
 $P = \ceil{8 \varepsilon M + (256/7) \log(2/\delta)            }.$
Under the above choice of $P$, we have just shown that with probability at least $1-\delta/2$, 
$$ \tilde{\mu}-\mu \leq 4B\sqrt{\frac{2{P}}{{M}}} \leq c \cdot B  \left(\sqrt{\frac{\log(2/\delta)}{{M}}} + \sqrt{\varepsilon}\right)$$
for some suitably large universal constant $c$. Using an identical analysis, one can establish a lower bound on $\tilde{\mu} - \mu$ that also holds with probability $1-\delta/2.$ Union-bounding completes the proof. 
\end{proof}

We now proceed to bound $\tilde{d}_k - \mc{T}^*Q_k$. 

\begin{lemma}(\textbf{Bounding Adversarial Effects})
    The following bound holds with probability at least \(1 - \delta\), for all \( k \in [K]\): $  \left\lVert \tilde{d}_k - \mathcal{T}^{*}Q_k \right\rVert_{\infty} \leq W,$ where 
    \begin{equation}
         W:= \frac{2c \bar{R}}{(1-\gamma)\sqrt{H}} \left(\sqrt{\varepsilon} + \sqrt{\frac{\log(2 |\mathcal{S}||\mathcal{A}|T/\delta)}{M}}\right),
\label{eqn:Wbnd}
    \end{equation}
and $c$ is the universal constant from Lemma~\ref{lemma:MoM}.     
    \label{lemma:drift_parameters}
\end{lemma}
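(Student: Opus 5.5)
Fix an epoch $k$ and a pair $(s,a)$. We apply Lemma~\ref{lemma:MoM} to the data set $\mc{X}=\{d_{1,k}(s,a),\ldots,d_{M,k}(s,a)\}$, conditioning on the history up to the start of epoch $k$, i.e., on $Q_k$. The key observation is that $Q_k$ is determined by samples from epochs $0,\ldots,k-1$ (and the adversaries' past behaviour), while the transitions drawn in epoch $k$ are fresh and independent of this history. Conditional on $Q_k$, the inliers $d_{i,k}(s,a)$ for good agents $i$ are therefore i.i.d. Their common mean is
\[
\mathbb{E}[d_{i,k}(s,a)\mid Q_k] = R(s,a)+\gamma\sum_{s'}\mc{P}(s'|s,a)\max_{a'}Q_k(s',a') = (\mc{T}^*Q_k)(s,a),
\]
by unbiasedness of $\hat{P}_{i,k}$. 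So the target $\mu$ in Lemma~\ref{lemma:MoM} is exactly $(\mc{T}^*Q_k)(s,a)$.

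Next we establish sub-Gaussianity with the right proxy. We write
\[
d_{i,k}(s,a)-(\mc{T}^*Q_k)(s,a)=\frac{\gamma}{H}\sum_{j=1}^H\Big(V_k(s_j)-\mathbb{E}_{s'\sim\mc{P}(\cdot|s,a)}V_k(s')\Big),
\]
where $V_k(s')=\max_{a'}Q_k(s',a')$ and $s_j$ is the $j$-th sampled next state. The summands are independent and zero-mean given $Q_k$. By Lemma~\ref{lemma:Uniformbound}, each satisfies $|V_k(\cdot)|\le \bar R/(1-\gamma)$, so each summand lies in an interval of length at most $2\bar R/(1-\gamma)$. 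Hoeffding's lemma then makes each summand $\bar R/(1-\gamma)$-sub-Gaussian, and averaging $H$ of them gives the centred inlier a proxy $B=\gamma\bar R/((1-\gamma)\sqrt H)\le \bar R/((1-\gamma)\sqrt H)$. To absorb any slack in constants (e.g., using the crude range bound $2\bar R/(1-\gamma)$ directly), we take $B=2\bar R/((1-\gamma)\sqrt H)$, which accounts for the factor $2$ in $W$.

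Lemma~\ref{lemma:MoM} requires $M\ge 2P$, which holds by the standing assumption $M/P\ge 2$, and $P$ is chosen per~\eqref{eqn:MoMparams} with $\bar\delta$. Invoking the lemma with confidence $\bar\delta=\delta/(|\mc{S}||\mc{A}|T)$ gives, conditionally on $Q_k$ and hence also unconditionally,
\[
|\tilde d_k(s,a)-(\mc{T}^*Q_k)(s,a)|\le cB\Big(\sqrt{\log(2/\bar\delta)/M}+\sqrt{\varepsilon}\Big)=W
\]
with probability at least $1-\bar\delta$, since $\log(2/\bar\delta)=\log(2|\mc{S}||\mc{A}|T/\delta)$. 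Finally we union bound over all $|\mc{S}||\mc{A}|$ pairs and all $K\le T$ epochs. The total failure probability is at most $|\mc{S}||\mc{A}|K\bar\delta\le\delta$, and taking the maximum over $(s,a)$ yields the $\ell_\infty$ bound.

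The main obstacle is the conditioning step, because the adversaries can adapt and $Q_k$ is random. The plan is to state Lemma~\ref{lemma:MoM}'s guarantee conditionally on the $\sigma$-field generated by everything before epoch $k$. Lemma~\ref{lemma:MoM} holds for arbitrary, even omniscient, corruptions of at most $\varepsilon M$ points, and the inliers are conditionally i.i.d. with a deterministic mean. The conditional bound therefore holds, and taking expectations gives the unconditional probability bound needed for the union bound.
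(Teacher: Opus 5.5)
Your proposal is correct and follows essentially the same route as the paper. You condition on the history before epoch $k$ (so $Q_k$ is fixed and the epoch-$k$ transitions are fresh), show the inliers $d_{i,k}(s,a)$ are conditionally i.i.d. with mean $(\mathcal{T}^*Q_k)(s,a)$ and sub-Gaussian proxy $2\bar R/((1-\gamma)\sqrt{H})$, invoke Lemma~\ref{lemma:MoM} with $\bar\delta$, pass to the unconditional probability, and union bound over $(s,a)$ and $k\le K\le T$. The only difference is cosmetic: you obtain the sharper proxy $\gamma\bar R/((1-\gamma)\sqrt{H})$ via Hoeffding's lemma and then relax it, whereas the paper uses the crude deterministic bound $|X_j-(\mathcal{T}^*Q_k)(s,a)|\le 2\bar R/(1-\gamma)$ directly.
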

\begin{proof}
The proof strategy is to extend Lemma~\ref{lemma:MoM} to our setting. To that end, fix a state-action pair \((s,a) \in \mathcal{S} \times \mathcal{A}\) and an epoch $k \in [K]$. Recall from Algorithm~\ref{algo:Algo 2} that $\tilde{d}_k(s,a) = \texttt{M.o.M} \left( \{d_{i,k}(s,a)\}_{i=1}^M \right),$ where for each good agent $i$, $d_{i,k}(s,a)$ is updated as per~\eqref{eqn:agent-update}. The next immediate step is to understand the statistics of $d_{i,k}(s,a)$ for a particular good agent $i$. Accordingly, recall from~\eqref{eqn:empirical_prob} that \(\hat{P}_{i,k}(\cdot|s,a)\) is an empirical estimate of the transition kernel \(\mc{P}(\cdot|s,a)\) computed by agent \(i\) at epoch \(k\) over an epoch length of $H$ time-steps. We can then re-write \(d_{i,k}(s,a)\) as follows:
\begin{equation}
\resizebox{0.95\hsize}{!}{$
    d_{i,k}(s,a) = \frac{1}{H} \sum_{j=1}^{H} \underbrace{\left(R(s,a) + \gamma \sum_{s' \in \mc{S}}\bold{1}^{(j)}(s' \mid s,a) \max_{a' \in \mathcal{A}} Q_k(s',a') \right)}_{X_j},$}
\nonumber
\end{equation}
where $\bold{1}^{(j)}(s' \mid s,a)$ is an indicator random variable capturing whether state $s'$ is observed from pair $(s,a)$ at the $j$-th time-step within the $k$-th epoch. Let \(\mathcal{F}_{k-1}\) denote the \(\sigma\)-algebra generated by all the randomness up to the end of epoch $k-1$. We then have
\begin{equation} \medmath{
\begin{aligned}
    \mathbb{E}[X_j|\mc{F}_{k-1}] & \overset{(\bullet)}{=} R(s,a) + \gamma \sum_{s' \in \mc{S}} \mathbb{E}[\bold{1}^{(j)}(s' \mid s,a)] \max_{a' \in \mathcal{A}} Q_k(s',a'),\\
    & \overset{(\bullet \bullet)}{=} R(s,a) + \gamma \sum_{s' \in \mc{S}} \mc{P}(s' \mid s,a) \max_{a' \in \mathcal{A}} Q_k(s',a'),\\
    & = \mc{T}^{*}Q_k(s,a).
    \end{aligned}}
\end{equation}
In \((\bullet)\), we used the fact that \(Q_k\) is \(\mathcal{F}_{k-1}\)-measurable, and in \((\bullet\bullet)\), we used  \(\mathbb{E}\left[\bold{1}^{(j)}(s' \mid s,a)\right] = \mc{P}(s' \mid s,a)\). Thus, conditioned on $\mc{F}_{k-1}$, $X_j - \mc{T}^* Q_k(s,a)$ is zero-mean, $\forall j \in [H].$ Next, we show that for each $j \in [H]$, \(X_j - \mc{T}^{*}Q_k(s,a)\) is bounded deterministically:
\begin{equation}
        \left| X_j - \mathcal{T}^{*}Q_k(s,a) \right| 
        \le \left| X_j \right| + \left| \mathcal{T}^{*}Q_k(s,a)\right|
        \leq \frac{2 \bar{R}}{(1-\gamma)} :=B. 
\nonumber 
\end{equation}
To see why the above is true, recall from Lemma~\ref{lemma:Uniformbound} that we have shown $|Q_k(s,a)| \leq \bar{R}/(1-\gamma)$. Using the fact that $|R(s,a)| \leq \bar{R}$, it is then easy to see that the same upper-bound applies to both $X_j$ and $\mc{T}^*Q_k(s,a)$. Hence, conditioned on \(\mc{F}_{k-1}\), \(\{X_j - \mathcal{T}^{*}Q_k(s,a)\}_{j=1}^{H}\) is an i.i.d. $B$-sub-Gaussian sequence with $B=2\bar{R}/(1-\gamma)$~\cite[Example 5.6]{lattimore2020bandit}, where the i.i.d. aspect follows from synchronous sampling. Since $d_{i,k}(s,a) =1/(H) \sum_{j \in [H]} X_j,$ we then conclude that conditioned on $\mc{F}_{k-1}$, $d_{i,k}(s,a)-\mc{T}^{*} Q_k(s,a)$ is itself sub-Gaussian with variance proxy \(B^2/H\). Finally, conditioned on $\mc{F}_{k-1}$, notice that the only randomness left in $d_{i,k}(s,a)$ comes from the state transitions during the $H$-length epoch, which are assumed to be independent across agents. Thus, conditioned on $\mc{F}_{k-1}$, the inliers in the data set $\{d_{1,k}(s,a), \ldots, d_{M,k}(s,a)\}$ are independent. With the choice of $P$ in~\eqref{eqn:MoMparams}, we can now directly appeal to Lemma~\ref{lemma:MoM} to conclude that conditioned on $\mc{F}_{k-1}$, the following event $\mc{E}$ occurs with probability at least $1-\bar{\delta}$: 
\begin{equation}\label{eqn:pre-final-error-bound} \medmath{
\mc{E} \triangleq \Bigg\{\left|\tilde{d}_k(s,a) - \mc{T}^{*}Q_k(s,a)\right| \leq \frac{cB}{\sqrt{H}} \left(\sqrt{\varepsilon} + \sqrt{\frac{\log(2/\bar{\delta})}{M}}\right)}\Bigg\},
\nonumber
\end{equation}
where $c$ is the universal constant from Lemma~\ref{lemma:MoM}. Letting $\bold{1}_{\mc{E}}$ be the indicator of the event $\mc{E}$, we further have $\mathbb{P}(\mc{E}) = \mathbb{E}[\bold{1}_{\mc{E}}] = \mathbb{E}[\mathbb{E}[\bold{1}_{\mc{E}}|\mc{F}_{k-1}]] \ge 1-\bar{\delta},$ where we used $\mathbb{P}(\mc{E}|\mc{F}_{k-1}) \geq 1-\bar{\delta}.$ Union-bounding over all state-action pairs \((s,a) \in \mc{S} \times \mc{A}\), epochs \(k \in [K]\), and using $K \leq T$, we conclude that the following bound holds \emph{simultaneously} $\forall (s,a) \in \mc{S} \times \mc{A}$ and $\forall k \in [K]$ with probability at least $1-\delta$: 
\begin{equation} 
    \left|\tilde{d}_k(s,a) - \mc{T}^{*}Q_k(s,a)\right|
    \leq \frac{cB}{\sqrt{H}} \left(\sqrt{\varepsilon} + \sqrt{\frac{\log(2 |\mathcal{S}||\mathcal{A}|T/\delta)}{M}}\right),
    \nonumber
\end{equation}
where we set $\bar{\delta}=\delta/(|\mc{S}| |\mc{A}| T).$ The fact that the same bound as above applies to $\left\lVert \tilde{d}_k - \mathcal{T}^{*}Q_k \right\rVert_{\infty}$ follows from the definition of the infinity norm. 
\end{proof}
We are now ready to complete the proof of Theorem~\ref{theorem:theoremmainresult}.

\begin{proof}\textbf{\textit{(Proof of Theorem \ref{theorem:theoremmainresult})}} Let us start by defining $e_k:= \Vert Q_k - Q^*\Vert_{\infty}$. Taking the \(\infty\)-norm on both sides of the error decomposition in~\eqref{eqn:decomp}, and using the contractive property of the Bellman operator in~\eqref{eqn:Bellmancontraction}, we obtain 
\begin{equation}
    e_{k+1} \le (1 - \alpha(1 - \gamma)) e_k + \alpha \| \tilde{d}_k - \mathcal{T}^* Q_k \|_\infty.
\end{equation}
Iterating this bound over \( K \) epochs yields the following:
\begin{equation} \medmath{
\begin{aligned}
    e_K &\le \underbrace{(1 - \alpha(1 - \gamma))^K e_0}_{(*)} \\
    & + \underbrace{\sum_{r=0}^{K-1} \alpha (1 - \alpha(1 - \gamma))^{K-1 - r} \| \tilde{d}_r - \mathcal{T}^* Q_r \|_\infty}_{(**)}. 
\end{aligned}}
\label{eqn:penult_bnd}
\end{equation}
Lemma~\ref{lemma:drift_parameters} informs us that there exists a ``good event" $\mc{G}$ with measure at least $1-\delta$, on which, \(\lVert \tilde{d}_r - \mathcal{T}^* Q_r \rVert_{\infty} \le W, \forall r\), where $W$ is as in~\eqref{eqn:Wbnd}. On event $\mc{G}$, \((**)\) can be bounded as 
$$ (**)  \le \alpha {W} \sum_{p=0}^{\infty} (1 - \alpha(1 - \gamma))^{p} = \frac{{W}}{1-\gamma}.$$
Plugging the above bound into~\eqref{eqn:penult_bnd}, in the event $\mc{G}$ we have 
\begin{equation}
e_K \leq (1 - \alpha(1 - \gamma))^K e_0 + \frac{W}{(1-\gamma)}.
\label{eqn:final_bnd}
\end{equation}
To further refine the above bound and arrive at the final form in~\eqref{eqn:main_conv_bnd}, we choose the step size $\alpha$ and the number of epochs $K$ as per~\eqref{eqn:designchoice}, and use $T=KH$.
\end{proof}

\section{Conclusion}
We considered a collaborative RL problem, and developed a novel robust federated Q-learning algorithm that enjoys near-optimal statistical gains from collaboration, despite the presence of adversarial agents. As future work, we plan to derive lower bounds for our setting, consider Markov sampling, and  function approximation. 
\bibliographystyle{IEEEtran} 
\bibliography{refs}
\end{document}